\newtheorem{lemma}{Lemma}
\newtheorem{corollary}{Corollary}
\newenvironment{proof}{\textit{Proof.} }{\hfill $\square$}
\renewcommand{\vec}[1]{\boldsymbol #1}
\newcommand{\mat}[1]{\mathbf{#1}}
\newcommand{\eps}{\varepsilon}
\newcommand{\A}{\mathcal{A}}
\newcommand{\D}{\mathcal{D}}
\renewcommand{\P}{\mat{P}}
\renewcommand{\S}{\mathcal{S}}
\newcommand{\IR}{\mathds{R}}
\newcommand{\PP}[2][]{\mathbb{P}_{#1}\left[#2\right]}
\newcommand{\EE}[2][]{\mathbb{E}_{#1}\left[#2\right]}
\newcommand{\set}[1]{\left\{#1\right\}}
\newcommand{\abs}[1]{\left|#1\right|}
\newcommand{\norm}[1]{\left\|#1\right\|}
\title{Class Teaching for Inverse Reinforcement Learners}
\author{Manuel Lopes\institute{INESC-ID, Instituto Superior Técnico, Portugal
email: manuel.lopes@tecnico.ulisboa.pt} \and Francisco S.\  Melo\institute{INESC-ID, Instituto Superior Técnico, Portugal
email: fmelo@inesc-id.pt} }
\begin{document}
\maketitle              %%%%%%%%%%%%%%%%%%%%%%%%%%%%%%%%%%%%%%%%%%%%%%%%

\begin{abstract}
%150--250 words.
%Machine teaching is the field where teaching algorithms are studied in order to make the teacher the most helpful and efficient when teaching a concept to a learning agent. So far the field has focused mainly on teaching single learning agents. 
In this paper we propose the first machine teaching algorithm for multiple inverse reinforcement learners. Specifically, our contributions are: (i) we formally introduce the problem of teaching a sequential task to a heterogeneous group of learners; (ii) we identify conditions under which it is possible to conduct such teaching using the same demonstration for all learners; and (iii) we propose and evaluate a simple algorithm that computes a demonstration(s) ensuring that all agents in a heterogeneous class learn a task description that is compatible with the target task. Our analysis shows that, contrary to other teaching problems, teaching a heterogeneous class with a single demonstration may not be possible as the differences between agents increase. We also showcase the advantages of our proposed machine teaching approach against several possible alternatives. 
\end{abstract}
%
%%%%%%%%%%%%%%%%%%%%%%%%%%%%%%%%%%%%%%%%%%%%%%%%%%%%%%%%%%%%%%%%%%%%%%%%%
%
\section{Introduction}
Machines can be used to improve education by providing personalized learning activities. Research on machine teaching and intelligent tutoring systems have considered various aspects of such machines~ \cite{patil2014,nkambou2010,Clement2015ITS,Davenport2012chemvlab,koedinger1997intelligent,Anderson1995Cognitive}. If we consider that a significant amount of teaching relies on providing examples, learning efficiency can be greatly improved if the teacher selects the examples that are more informative for each particular learner or class.

%Some existing education systems treat the human learner as a black-box function, where the actual learning process of the learner is not directly modeled. {\em Intelligent tutoring systems} (ITS), for instance, provide personalized contents and/or exercises tailored for each user \cite{Anderson1995Cognitive,koedinger1997intelligent,nkambou2010,Clement2015ITS,mota15aamas} but typically focus the teaching process on providing a teaching sample given the received response.
{\em Machine teaching} (MT) considers the problem of finding the smallest set of examples that allows a specific learner to acquire a given concept. MT sets itself apart from standard intelligent tutoring systems in that it explicitly considers a specific computational model of the learner \cite{Balbach09algteach,Zhu2013Machine,Zhu2015Machine,zhu2018overview}. The optimal amount of training examples needed to teach a target task to a specific learner is known as the {\em teaching dimension} (TD) of that task-learner pair~\cite{Goldman1995Complexity,Shinohara1991Teachability}. By optimizing the teaching dimension, machine teaching promises to strongly reduce the effort required from both learner and teacher.

Machine teaching, much like intelligent tutoring systems, can be applied in several real world problems. We are motivated by examples where we need to teach task that are sequential in nature: cognitive tasks e.g. algebra, or algorithms; motor tasks e.g. industrial maintenance or assembly. In many such cases, we need to train a large number of learners, who might have different cognitive and motor skills.

%
%However, teaching is a very difficult activity due to several difficulties: estimating the cognitive model of the learners ~\cite{gonzalez2014,corbett1994,beck2013}; maintaining a level of difficulty adequate to the progress of the learner \cite{lee2005}; ensuring that constant evaluations do not reduce motivation and engagement \cite{shute2011}; and also finding particularly good teaching examples at each stage of learning ~\cite{rafferty2011,clement2013}.
%
% 

Most MT research so far has focused on single-learner settings in non-sequential tasks---such as Bayesian estimation and classification~\cite{Goldman1995Complexity,Shinohara1991Teachability,Balbach09algteach,Zhu2013Machine,Zhu2015Machine,zhu2018overview}. Recently, however, some works have considered the extension of the machine teaching paradigm to novel settings. For example:
\begin{enumerate}
\item Some works have investigated the impact of group settings on machine teaching results. Zhu et al. \cite{zhu2017no} show that, by dividing a group of learners in small groups, it is possible to attain a smaller teaching dimension. The work of Yeo et al. \cite{yeo2019iterative} generalize those results for more complex learning problems, and consider additional differences between the learners, e.g. learning rates.
\item Some works \cite{Haug2018TeachingIR,Walsh_dynamicteaching,Melo18ijcai} investigate the impact that the mismatch between the learner and the teacher's model of the learner may have in the teaching dimension---a situation that is particularly relevant in group settings. The aforementioned works focus on supervised learning settings, although some more recent works have started to explore inverse reinforcement learning settings \cite{kamalaruban2019interactive}.
\item Other works have considered machine teaching in sequential decision tasks. Cakmak and Lopes \cite{cakmak2012algorithmic} introduce the first machine teaching algorithm for sequential decision tasks (i.e., when the learners are inverse reinforcement learners). Brown and Niekum \cite{brown2018machine} propose an improved algorithm that takes into consideration reward equivalence in terms of the target task representation. The work of Rafferty et al. \cite{rafferty2015inferring} considers sequential tasks in a different way, instead of evaluating the quality of learning based on the match between the demonstrated and the learned policy, it infers the understanding of the task by estimating the world model that the learners inferred.
\end{enumerate}

In this paper, we build on the extensions above and consider the problem of teaching a sequential task to a group of learners (a ``class''). We henceforth refer to a setting where a single teacher interacts with multiple (possibly different) learners as {\em class teaching}. We follow Cakmak and Lopes \cite{cakmak2012algorithmic} in assuming that the learners are inverse reinforcement learners \cite{ng2000algorithms}, and address the problem of selecting a demonstration that ensures that all learners are able to recover a task description that is ``compatible'' with the target task, in a sense soon to be made precise. 

Teaching a sequential task in a class setting, however, poses several additional complications found neither in single-agent settings \cite{Haug2018TeachingIR,Walsh_dynamicteaching,cakmak2012algorithmic,brown2018machine}, nor on estimation/classification settings \cite{zhu2017no,Walsh_dynamicteaching,yeo2019iterative}.

In this setting we need to teach not only one particular learner, but a whole diverse group of learners. The teacher needs to guarantee that all learners learn, while delivering the same lecture to everyone. Learner diversity might have different origins, they may go from different learning rates or prior information, to having a completely learning algorithm. Intuitively speaking we may think that if the differences are large then each learner needs a particular demonstration and class teaching is not possible. Nevertheless, quantifying what are large differences is not trivial. For example, in the family of tasks considered in Zhu et al. \cite{zhu2017no,yeo2019iterative} learners have large differences in their prior information. But, no matter the amount of differences, all learners can be taught with the same demonstration, even if a larger number of samples is required. We want to understand what happens in sequential tasks, and quantify which differences between learners may still allow to teach all of them simultaneously, and which differences can not be addressed.

In this work we discuss the challenges arising when extending machine teaching of sequential tasks in class settings. We contribute the first formalization of the problem from the teacher's perspective. We then contribute an analysis of the problem, identifying conditions under which it is possible to teach a heterogeneous class with a common demonstration. From our analysis, we then propose the first class teaching algorithm for sequential tasks and illustrate its advantages against other more ``naive'' alternatives.

%The paper is organized as follows: in section 2 we give a brief overview on Reinforcement Learning (RL), Inverse Reinforcement Learning (IRL) and machine teaching in IRL, to serve as background for what we have in the following sections. In section 3 we make a theoretical formulation of the problem, jointly we the necessary conditions for class teaching. We then propose a first algorithm to optimally teach sequential decision tasks to multiple learners. In section 4 we make several experiments to evaluate the proposed approach. We finish with the conclusions and plans for future work in section 5.

%%%%%%%%%%%%%%%%%%%%%%%%%%%%%%%%%%%%%%%%%%%%%%%%%%%%%%%%%%%%%%%%%%%%%%%%%%%%%%%%%%%%%%%%%%%%%%%%%%%%%%%%%%%%%%%%%%%%%%%%%%%%%%%%%%%%%%%%%%%%%%%%%%%%%%

\section{Background}
\label{sec:RL}

In this section we go over key background concepts upon which our work builds, both to set the nomenclature and the notation. We go over Markov decision problems (MDPs) \cite{putterman2005markov}, inverse reinforcement learning \cite{ng2000algorithms} and machine teaching in RL settings \cite{cakmak2012algorithmic,brown2018machine}. 

\subsection{Markov Decision Problems}
\label{sec:ReinforcementLearning}

A Markov decision problem (MDP) is a tuple $(\S,\A,\P,r,\gamma)$, where $\S$ is the state space, $\A$ is the action space, $\P$ encodes the transition probabilities, where 
\begin{equation*}
\P(s'\mid s,a)=\PP{S_{t+1}=s'\mid S_t=s,A_t=a},
\end{equation*}
and $S_t$ and $A_t$ denote, respectively, the state and action at time step $t$. The function $r:\S\to\IR$ is the reward function, where $r(s)$ is the reward received by the agent upon arriving at a state $s\in\S$. Finally, $\gamma\in[0,1)$ is a discount factor. 

A {\em policy} is a mapping $\pi:\S\to\Delta(\A)$, where $\Delta(\A)$ is the set of probability distributions over $\A$. Solving an MDP amounts to computing the optimal policy $\pi^*$ that maximizes
\begin{equation*}
V^\pi(s)\triangleq\EE{\sum_{t=0}^\infty\gamma^tr(s)\mid S_0=s,A_t\sim\pi(\cdot\mid S_t)}
\end{equation*}
for all $s\in\S$. In other words, we have that $V^{\pi^*}(s)\geq V^\pi(s)$ for all policies $\pi$ and states $s$. We henceforth denote by $\pi^*(r)$ the optimal policy with respect to the MDP $(\S,\A,\P,r,\gamma)$, where $\S$, $\A$, $\P$, and $\gamma$ are usually implicit from the context. Writing the value function $V^\pi$ as a vector $\vec{v}^\pi$, we get
\begin{equation}\label{Eq:Vector-form}
\vec{v}^\pi
  =\vec{r}+\gamma\P_\pi\vec{v}^\pi
  =(\mathbf{I}-\gamma\P_\pi)^{-1}\vec{r},
\end{equation}
where $\P_\pi$ is a matrix with component $ss'$ given by:
\begin{displaymath}
[\vec{P_\pi}]_{ss'}=\sum_{a\in\A}\pi(a\mid s)\P(s'\mid s,a).
\end{displaymath}

%%%%%%%%%%%%%%%%%%%%%%%%%%%%%%%%%%%%%%%%%%%%%%%%%%%%%%%%%%%%%%%%%%%%%%%%%
%%%%%%%%%%%%%%%%%%%%%%%%%%%%%%%%%%%%%%%%%%%%%%%%%%%%%%%%%%%%%%%%%%%%%%%%%

\subsection{Inverse Reinforcement Learning}
\label{sec:IRL}

In inverse reinforcement learning (IRL),  we are provided with a ``rewardless MDP'' $(\S,\A,\P,\gamma)$ and a sample of the policy $\pi$, or a trajectory, and wish to determine a reward function $r^*$ such that $\pi$ is optimal with respect to $r^*$, i.e., $\pi=\pi^*(r^*)$ for the resulting MDP. If $\pi$ is optimal then, given an arbitrary policy $\pi'$, 
\begin{equation*}
\vec{r}+\gamma\P_\pi\vec{v}^\pi\succeq\vec{r}+\gamma\P_{\pi'}\vec{v}^\pi,
\end{equation*}
where we write $\succeq$ to denote element-wise inequality. Using \eqref{Eq:Vector-form}, the solution must verify the constraint
\begin{equation}\label{Eq:IRL-Inequality}
(\P_\pi-\P_{\pi'})(\mat{I}-\gamma\P_\pi)^{-1}\vec{r}\succeq\vec{0}.
\end{equation}

Unfortunately, the constraint in \eqref{Eq:IRL-Inequality} is insufficient to identify $r^*$. For one, \eqref{Eq:IRL-Inequality} is trivially verified for $\vec{r}=\vec{0}$. More generally, given a policy $\pi$, there are multiple reward functions that yield $\pi$ as the optimal policy. In the context of an IRL problem, we say that two reward functions $r$ and $r'$ are {\em policy equivalent} if $\pi^*(r)=\pi^*(r')$.%
\footnote{This happens, for example, if $r-r'$ is a {\em potential function} \cite{ng99icml}.} 
Moreover, the computation of the constraint in \eqref{Eq:IRL-Inequality} requires the learner to access the complete policy $\pi$. In practice, however, it is inconvenient to explicitly enunciate $\pi$. Instead, the learner is provided with a {\em demonstration} consisting of a set
\begin{equation*}
\D=\set{(s_n,a_n),n=1,\ldots,N}
\end{equation*}
where, if $(s,a)\in\D$, $a$ is assumed optimal in state $s$.

To address the two difficulties above, it is common to treat \eqref{Eq:IRL-Inequality} as a constraint that the target reward function must verify, but select the latter so as to meet some additional regularization criterion $J$, in an attempt to avoid the trivial solution \cite{ng2000algorithms}. For the purpose of this work, we re-formulate IRL as
\begin{equation}\label{Eq:IRL-Optimization-1}
\begin{aligned}
  \max         &&&\vec{1}^\top\vec{v}\\
  \text{s.t.} &&&(\vec{p}(s_n,a_n)-\vec{p}(s_n,b))\vec{v}\succeq\eps,  \forall (s_n,a_n)\in\D, b\in\A\\
           &&& 0\preceq\vec{v}\preceq\frac{R_{max}}{1-\gamma},
\end{aligned}
\end{equation}
where $\vec{p}(s,a)$ is the row-vector with element $s'$ given by $\P(s'\mid s,a)$. In \eqref{Eq:IRL-Optimization-1} we directly solve for $V^\pi$ instead of $r^*$, and then compute $r^*$ as 
\begin{equation*}
\vec{r}^*=\vec{v}-\gamma\max_{a\in\A}\P_a\vec{v}.
\end{equation*}
The IRL formulation in \eqref{Eq:IRL-Optimization-1} implicitly assumes a reward $r\leq R_{max}$, which has no impact of the representative power of the solution. Moreover, it deals with the inherent ambiguity of IRL by maximizing the value of all states while imposing that the ``optimal actions'' are at least $\eps$ better than sub-optimal actions. The proposed formulation, while closely related to the simpler approaches in \cite{ng2000algorithms}, is simpler to solve and less restrictive in terms of assumptions.

We emphasize that previous works on machine teaching in sequential tasks assume that IRL learners turn a demonstration into constraints that the reward function must verify, like those in \eqref{Eq:IRL-Inequality}. However, such constraints are built in a way that requires the learner to know (or, at least, be able to sample) the teacher's policy $\pi$ \cite{cakmak2012algorithmic,brown2018machine}. As argued before, this is often inconvenient/unrealistic. Our formulation in \eqref{Eq:IRL-Optimization-1} is an original contribution that readily overcomes such limitation and has interest on its own. 

In the remainder of the paper, we refer to an ``IRL agent'' as defined by a rewardless MDP $(\S,\A,\P,\gamma)$ that, given a demonstration $\D$, outputs a reward $r(\D)$ obtained by solving \eqref{Eq:IRL-Optimization-1}.

%%%%%%%%%%%%%%%%%%%%%%%%%%%%%%%%%%%%%%%%%%%%%%%%%%%%%%%%%%%%%%%%%%%%%%%%%
%%%%%%%%%%%%%%%%%%%%%%%%%%%%%%%%%%%%%%%%%%%%%%%%%%%%%%%%%%%%%%%%%%%%%%%%%

\subsection{Machine Teaching in IRL}
\label{section:teachingIRL}

\begin{figure}[!tb]
\centering
\subfigure[IRL agent $A$.]{\label{Fig:Learner-1}
  \includegraphics[width=0.45\columnwidth]{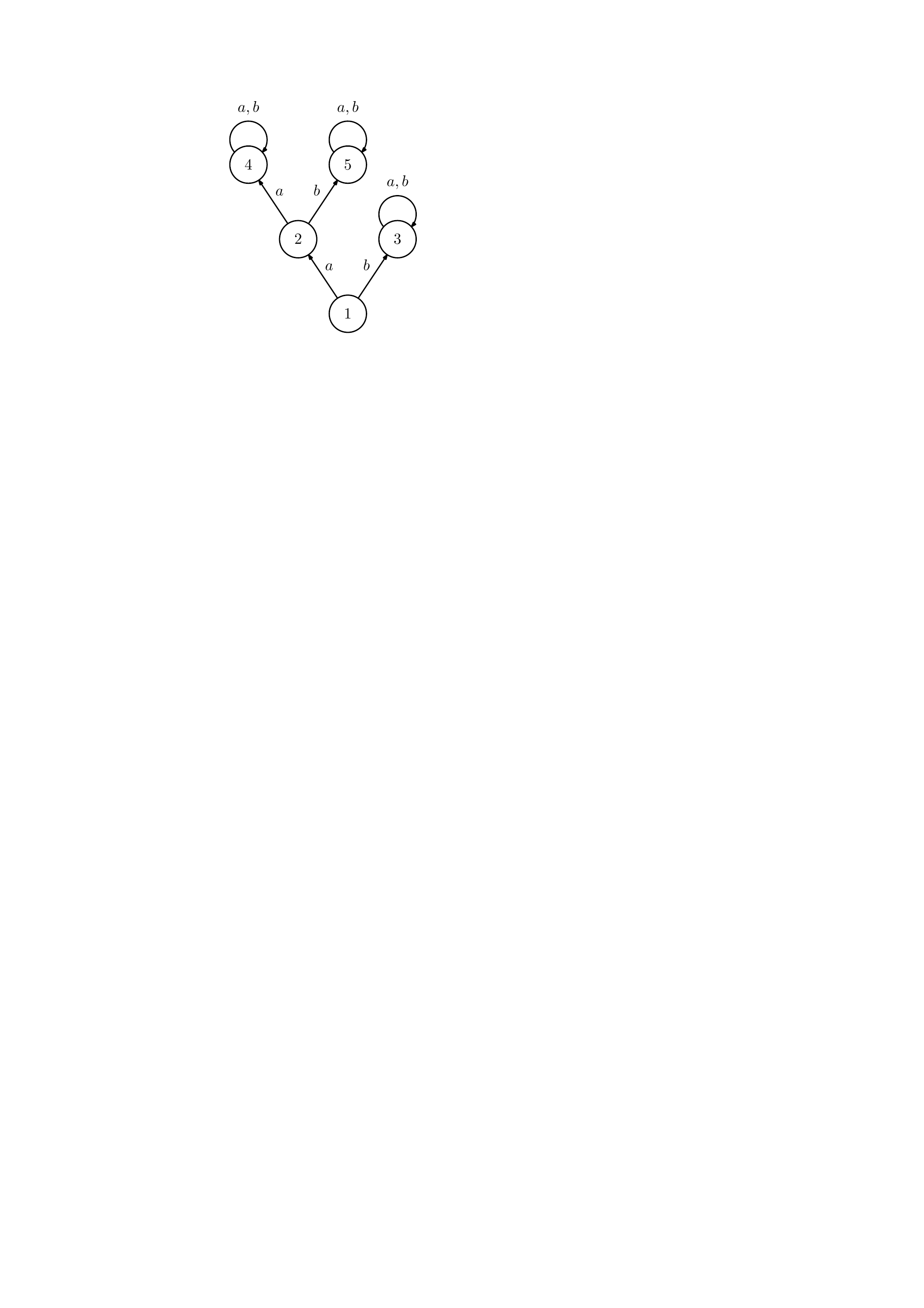}  
} \subfigure[IRL agent $B$.]{\label{Fig:Learner-2}
  \includegraphics[width=0.45\columnwidth]{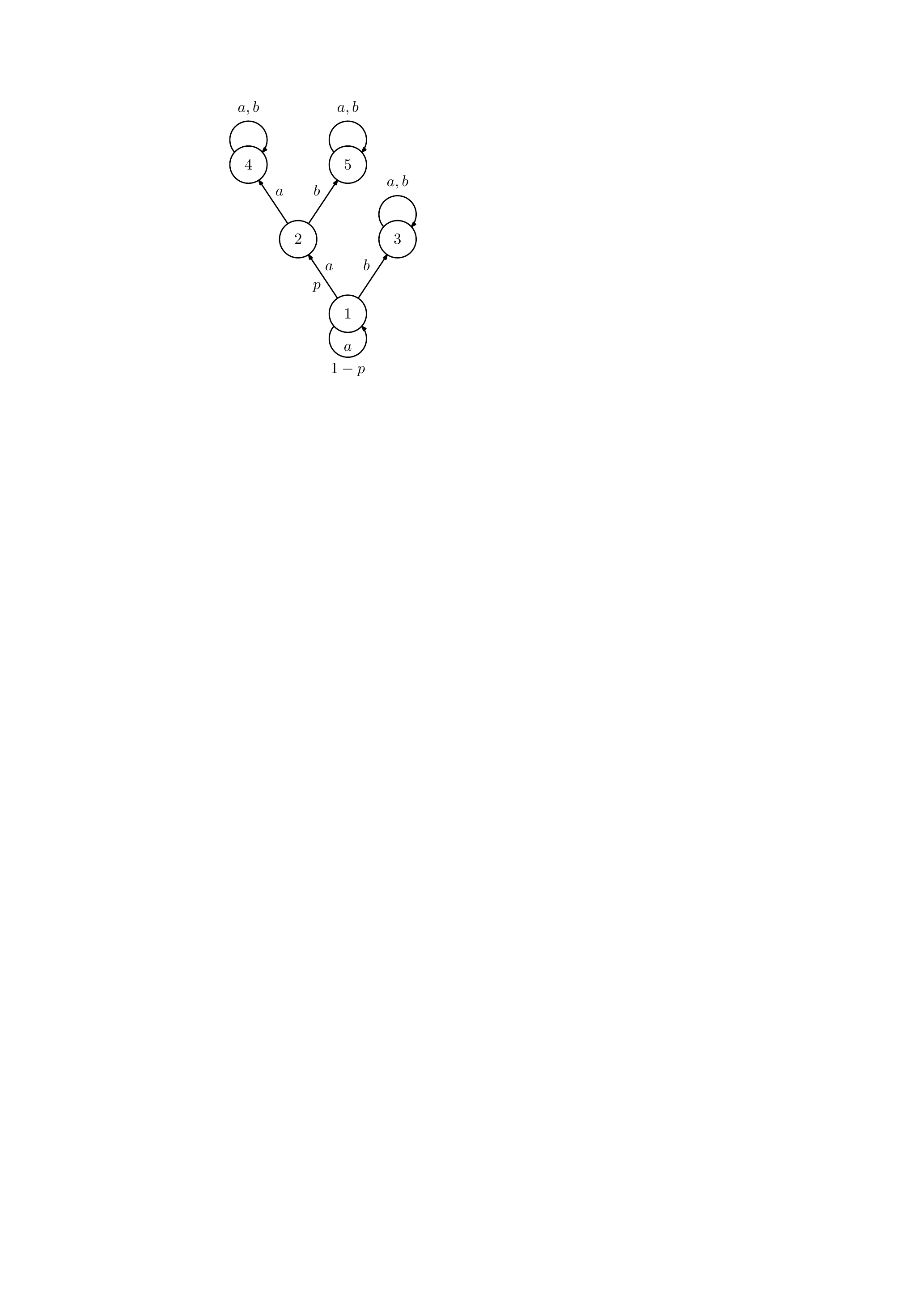}  
}
\caption{Diagram representing two IRL agents (all unmarked transition are deterministic). The two agents are similar in all states except $1$, where action $a$ always succeeds for agent $A$ but only succeeds with probability $p$ for agent $B$.}
\label{Fig:Two-IRL-learners}
\end{figure}

We now consider the problem of {\em teaching} an IRL agent. In particular, given an IRL agent described by a rewardless MDP $(\S,\A,\P,\gamma)$ and a target reward function $r^*$, we want to determine the ``most concise'' demonstration $\D$ such that $r(\D)$ is policy-equivalent to $r^*$, i.e.,
\begin{equation*}
\pi^*(r^*)=\pi^*(r(\D)).    
\end{equation*} 
By ``most concise'' we imply that there is a function, $\mathsf{effort}$, that measures the teaching effort associated with any demonstration $\D$ (for instance, the number of examples in $\D$). Teaching an IRL agent can thus be formulated as solving
\begin{equation}\label{Eq:MT-IRL-Optimization}
\begin{aligned}
  \min_\D  &&& \mathsf{effort}(\D)\\
  \text{s.t.} &&& \pi^*(r^*)=\pi^*(r(\D)).
\end{aligned}
\end{equation}
Consider, for example, the IRL agent $A$ in Fig.~\ref{Fig:Learner-1}, defined as the rewardless $(\set{1,2,3,4,5},\set{a,b},\P,\gamma)$, where the edges represent the transitions associated with the different actions (unmarked edges correspond to deterministic transitions) and $\gamma>0.5$. If the target reward is
\begin{equation}\label{Eq:Reward}
\vec{r}^*=
\begin{bmatrix}  
0 & 0 & 1 & 0 & 2  
\end{bmatrix}^\top,
\end{equation}
the optimal value function is given by
\begin{equation*}
\vec{v}^*=\frac{1}{1-\gamma}
\begin{bmatrix}  
2\gamma^2 &
2\gamma &
1 &
0 & 
2
\end{bmatrix}^\top,
\end{equation*}
and the optimal policy selects action $a$ in state $1$ and action $b$ in state $2$, since $2\gamma>1$. Since both actions are equal in the remaining states, the most succinct demonstration should be, in this case,
\begin{equation*}
\D=\set{(1, a),(2, b)}.
\end{equation*}

As another example, consider the IRL agent $B$ in Fig.~\ref{Fig:Learner-2}. This learner is, in all aspects, similar to IRL agent $A$ except that action $a$ is now stochastic in state $1$ and succeeds only with probability $p$. The optimal value function is now
\begin{equation*}
\vec{v}^*=\frac{1}{1-\gamma}
\begin{bmatrix}  
u &
2\gamma &
1 &
0 & 
2
\end{bmatrix},
\end{equation*}
where
\begin{equation*}
u=\max\set{\frac{2\gamma^2p}{1-\gamma(1-p)},1}.
\end{equation*}
Then, if 
\begin{equation*}
    p>\frac{1-\gamma}{\gamma(2\gamma-1)},
\end{equation*}
the optimal policy is the same as in the previous case, as is the most concise demonstration. If, instead, the reverse inequality holds, the optimal policy is now to select action $b$ in both states $1$ and $2$, and the best demonstration is
\begin{equation*}
    \D=\set{(1, b),(2, b)}.
\end{equation*} 
Finally, if 
$p=\frac{1-\gamma}{\gamma(2\gamma-1)},$
then both actions are equally good in state $1$, and the most concise demonstration is just $\D=\set{(2, b)}$.

%%%%%%%%%%%%%%%%%%%%%%%%%%%%%%%%%%%%%%%%%%%%%%%%%%%%%%%%%%%%%%%%%%%%%%%%%%%%%%%%%%%%%%%%%%%%%%%%%%%%%%%%%%%%%%%%%%%%%%%%%%%%%%%%%%%%%%%%%%%%%%%%%%%%%

\section{Class Teaching of Sequential Tasks}

In this section we present our main contributions. We start by formalizing the problem of class teaching for IRL learners, i.e. teach simultaneously multiple IRL learners. We then identify necessary conditions that ensure that we can teach all learners in a class simultaneously, i.e. using the same demonstrations for all. We finally provide a first algorithm that is able to teach under these conditions.

%\todo{list assumptions on learners: learn with, single optimal action per state, ...}

\subsection{Teaching a Class of IRL Learners}%
\label{Subsec:Teaching-IRL-class}

We now consider a teacher facing a heterogeneous class of $L$ IRL learning agents, each one described as a rewardless MDP $M_\ell=(\S,\A,\P_\ell,\gamma)$. Note that we allow different learners to have different models.%
\footnote{For sake of clarity most of the discussion considers only differences in terms of transition probabilities. In the results we also show difference in terms of different discount $\gamma$. Differences in features can also be considered with minor changes.}
We assume that the teacher perfectly knows the models $M_1,\ldots,M_L$ and that the learners all adopt the IRL formulation in \eqref{Eq:IRL-Optimization-1}, given a demonstration $\D$ consisting of a set of state-action pairs.

Given a target reward function $r^*$, the goal of the teacher is, once again, to find the ``most concise'' demonstration $\D$ that ensures that $r_\ell(\D)$ is compatible with $r^*$, where $r_\ell(\D)$ is the reward computed by the IRL agent $\ell$ upon observing $\D$. In other words, the goal of the teacher is to solve the optimization problem
\begin{equation}\label{Eq:MT-MultiIRL-Optimization}
\begin{aligned}
  \min_\D  &&& \mathsf{effort}(\D)\\
  \text{s.t.} &&& \pi_\ell^*(r^*)=\pi_\ell^*(r_\ell(\D)), \text{for $\ell=1,\ldots,L$.}
\end{aligned}
\end{equation}
For the sake of concreteness, we henceforth consider $\mathsf{effort}(\D)=\abs{\D}/\abs{\S}$, roughly corresponding to the ``percentage'' of demonstrated states. The constraint in \eqref{Eq:MT-MultiIRL-Optimization} states that the teacher should consider only demonstrations $\D$ that ensure the optimal policies for $(\S,\A,\P_\ell,r^*,\gamma)$ and $(\S,\A,\P_\ell,r_\ell(\D),\gamma)$ to be the same, for all $\ell=1,\ldots,L$. 

In general, the problem \eqref{Eq:MT-MultiIRL-Optimization} may not have a solution. In fact, there may be no single demonstration that ensures that all learners recover a reward function compatible with $r^*$. Consider for instance a class comprising agents $A$ and $B$ from Fig.~\ref{Fig:Two-IRL-learners}. Suppose that the target reward is that in \eqref{Eq:Reward} and that 
\begin{equation*}
p<\frac{1-\gamma}{\gamma(2\gamma-1)}.
\end{equation*}
If we provide the demonstration $\D=\set{(2,b)}$, the only constraint imposed by such demonstration is that $V(5)\geq V(4)+\eps$, which leads to the solution
\begin{equation*}
\vec{v}=\frac{1}{1-\gamma}
\begin{bmatrix}  
1 & 1 & 1 & 1-\eps(1-\gamma) & 1
\end{bmatrix}^\top,
\end{equation*}
corresponding to the reward
\begin{equation}\label{Eq:Reward-A1}
\vec{r}=
\begin{bmatrix}  
1 & 1 & 1 & 1-\eps(1-\gamma) & 1
\end{bmatrix}^\top.
\end{equation}
Such reward does not verify the constraint in \eqref{Eq:MT-MultiIRL-Optimization}. For example, the policy that selects $a$ and $b$ in state $1$ with equal probability is optimal with respect to the reward in \eqref{Eq:Reward-A1}, both for $A$ and $B$. However, it is not optimal with respect to $r^*$ for neither of the two. Repeating the derivations above for the demonstration $\D=\set{(1,a),(2,b)}$, we immediately see that the reward $r_A(\D)$ will verify the constraint in \eqref{Eq:MT-MultiIRL-Optimization} but not the reward $r_B(\D)$. Conversely, if $\D=\set{(1,b),(2,b)}$, we immediately see that  $r_B(\D)$ will verify the constraint in \eqref{Eq:MT-MultiIRL-Optimization} but not $r_A(\D)$.

The example above can be distilled in the following result, where a demonstration $\D$ is {\em complete} if there is a pair $(s,a)\in\D$ for every $x\in\S$. 

\begin{lemma}\label{Lemma:Complete-demos}
For two complete demonstrations $\D_1$ and $\D_2$ and two arbitrary IRL agents $A$ and $B$ described, respectively, by the rewardless MDPs $(\S,\A,\P_A,\gamma)$ and $(\S,\A,\P_B,\gamma)$, then 
\begin{equation*}
  \pi^*_A(r_A(\D_1))=\pi^*_B(r_B(\D_2))
\end{equation*}
if and only if $\D_1=\D_2$.
\end{lemma}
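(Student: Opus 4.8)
The plan is to prove a single structural fact from which both implications follow at once: for any complete demonstration $\D$ and any rewardless MDP with transition model $\P$, the optimal policy $\pi^*(r(\D))$ recovered via \eqref{Eq:IRL-Optimization-1} is exactly the deterministic policy $\pi_\D$ that plays, in each state $s$, the action $a$ with $(s,a)\in\D$ --- and, crucially, this policy does \emph{not} depend on $\P$. Granting this, $\pi^*_A(r_A(\D_1))=\pi_{\D_1}$ and $\pi^*_B(r_B(\D_2))=\pi_{\D_2}$, so the equality in the statement holds if and only if $\pi_{\D_1}=\pi_{\D_2}$, which for complete demonstrations is equivalent to $\D_1=\D_2$.

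First I would record a well-definedness remark that also supplies the bijection used later. Feasibility of \eqref{Eq:IRL-Optimization-1} forces each demonstrated state to carry exactly one action: two pairs $(s,a),(s,a')$ with $a\neq a'$ would impose $(\vec{p}(s,a)-\vec{p}(s,a'))\vec{v}\succeq\eps$ together with its reverse, which is infeasible for $\eps>0$. Combined with completeness (at least one pair per state), this identifies $\D$ with the graph of a deterministic policy $\pi_\D$ and yields the bijection between complete demonstrations and deterministic policies that I will invoke in the ``only if'' direction.

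The heart of the argument is to show that the vector $\vec{v}$ returned by \eqref{Eq:IRL-Optimization-1} is the optimal value function of $(\S,\A,\P,r(\D),\gamma)$. Substituting the reconstructed reward $\vec{v}-\gamma\max_a\P_a\vec{v}$ into the Bellman optimality operator returns $\vec{v}$, so $\vec{v}$ is its unique fixed point and hence equals $\vec{v}^*$. Since the one-step rewards are state-only, the greedy (optimal) policy at $s$ maximizes $\vec{p}(s,a)\vec{v}$, and the margin constraints make the demonstrated action $a_s$ the strict, and therefore unique, maximizer. Thus $\pi^*(r(\D))=\pi_\D$ irrespective of $\P$ --- a conclusion insensitive to any non-uniqueness of the maximizer $\vec{v}$, since every feasible $\vec{v}$ enforces the same strict orderings.

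With the key claim in hand both directions close immediately: if $\D_1=\D_2$ then both sides equal the common $\pi_\D$; conversely $\pi_{\D_1}=\pi_{\D_2}$ forces the demonstrated action to agree in every state, i.e. $\D_1=\D_2$. I expect the main obstacle to be the middle step: one must verify carefully that the IRL constraints pin down the genuinely \emph{optimal} policy rather than merely constraining $\vec{v}$, which is exactly what the Bellman fixed-point identity secures; and it is the completeness hypothesis that removes the dependence on $\P$ at undemonstrated states --- dependence that would otherwise let agents $A$ and $B$ disagree and break the equivalence.
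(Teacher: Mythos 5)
Your proposal is correct and takes essentially the same route as the paper's (much terser) proof: a complete demonstration, through the $\eps$-margin constraints in \eqref{Eq:IRL-Optimization-1}, forces every IRL agent to adopt exactly the demonstrated deterministic policy regardless of its transition model, from which both directions of the equivalence follow immediately. Your write-up supplies the details the paper leaves implicit --- that feasibility forces exactly one action per demonstrated state, and the Bellman fixed-point argument identifying the solution $\vec{v}$ with the optimal value function of the reconstructed reward --- while sharing the paper's tacit assumption that \eqref{Eq:IRL-Optimization-1} is feasible for the given complete demonstrations (an assumption that can fail, e.g., when two distinct actions have identical transition rows at some state, as in states $3$--$5$ of Fig.~\ref{Fig:Two-IRL-learners}).
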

\begin{proof}
By definition, a complete demonstration includes a pair for every state $s\in\S$ with a corresponding optimal action. The constraints implied by the demonstration will necessarily lead both agents to learn similar policies. Conversely, if the agents learn different policies, either the demonstrations are different or incomplete.
\end{proof}

As argued before, assuming that the policy is provided to the learners in full (i.e., the demonstration is complete) is often unrealistic. In the more natural situation of an incomplete demonstration, the conclusion of Lemma~\ref{Lemma:Complete-demos} no longer holds, as established by the following result.

%\begin{theorem}
%Considering incomplete demonstrations $D$, a regularization parameter $\rho$ and two different MDPs $P_1\neq P_2$, we have:
%\[
%\exists P_1, P_2 \exists \rho \exists D: D=D_1=D_2\iff \pi_1^{R^\rho|D} \neq \pi_2^{R^\rho|D}
%\]
%We consider that the MDPs have several states and actions per state.
%\label{Theo:sDdP}
%\end{theorem}

%\begin{proof}
%There are several possible situations. If the demonstration is complete then this is not possible.\todo{check this:: If the demonstration is incomplete but fully constrains R then this is also not possible.}

%The most interesting situation is for under-constrained demonstrations. In this case for the non-constrained state rewards there are regularization parameters that will lead to rewards that generate different policies. We can complete the demonstration referring to the example in Fig.\ref{fig:mIRL} and consider a demonstration D=\{(2,b)\}. In this case both $P_1$ and $P_2$ have has constraints $R(5)> R(4)$ but $R(3)$ is unconstrained. For an estimated reward $R=[0,0,0.25,0.25,0,5]$ the estimated policies will be different.
%\end{proof}

\begin{theorem}\label{Theo:sDdP}
Let $\S$ and $\A$ denote arbitrary finite state and action spaces, with $\abs{\S}>1$ and $\abs{\A}>1$, and $\D\subset\S\times\A$ an incomplete demonstration. Then, there exist two IRL agents $(\S,\A,\P_A,\gamma)$ and $(\S,\A,\P_B,\gamma)$ such that
\begin{equation}\label{Eq:Different-policies}
\pi^*_A(r_A(\D))\neq\pi^*_B(r_B(\D)).
\end{equation}
In other words, an incomplete demonstration may lead to different policies in different agents.
\end{theorem}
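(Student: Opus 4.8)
The plan is to give an explicit construction of the two agents for the given $\D$. Since $\D$ is incomplete, fix a state $\hat{s}\in\S$ that occurs in no pair of $\D$; assuming $\D$ is non-empty (the empty case is addressed at the end) and using $\abs{\S}>1$, fix two distinct states $s_+,s_-\in\S$. The guiding idea is to build $\P_A$ and $\P_B$ so that they \emph{coincide at every state except} $\hat{s}$. Because the constraints of \eqref{Eq:IRL-Optimization-1} involve only the transition rows $\vec{p}(s_n,\cdot)$ of demonstrated states, and $\hat{s}$ is undemonstrated, the two agents will then solve the \emph{identical} linear program and recover the \emph{same} value function $\vec{v}$; the only remaining freedom is the behaviour at $\hat{s}$, which is exactly what I would use to separate the induced policies.

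For the shared part I would force $\vec{v}$ to be non-constant. For every demonstrated pair $(s_n,a_n)\in\D$, route the demonstrated action deterministically to $s_+$ and every alternative action to $s_-$, i.e.\ set $\vec{p}(s_n,a_n)=\vec{e}_{s_+}$ and $\vec{p}(s_n,b)=\vec{e}_{s_-}$ for $b\neq a_n$ (transitions at undemonstrated states other than $\hat{s}$ are irrelevant and may be taken as self-loops, identically for both agents). Each pair then contributes, for every alternative action, the single constraint $v(s_+)-v(s_-)\succeq\eps$, which is feasible inside the box $0\preceq\vec{v}\preceq R_{max}/(1-\gamma)$ for small $\eps$ and forces $v(s_+)>v(s_-)$ at every feasible, hence optimal, $\vec{v}$. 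This makes $r_A(\D)$ and $r_B(\D)$ well defined, and since the two programs are literally the same, the recovered $\vec{v}$ is common to both agents.

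Finally I would exploit $\abs{\A}>1$: pick two actions $a\neq b$ and define the behaviour at $\hat{s}$ in opposite ways, namely $\vec{p}_A(\hat{s},a)=\vec{e}_{s_+}$ and $\vec{p}_A(\hat{s},b)=\vec{e}_{s_-}$ for agent $A$, and the swap $\vec{p}_B(\hat{s},a)=\vec{e}_{s_-}$ and $\vec{p}_B(\hat{s},b)=\vec{e}_{s_+}$ for agent $B$ (any further actions at $\hat{s}$ routed to $s_-$). Since the reward recovered in \eqref{Eq:IRL-Optimization-1} is $\vec{r}=\vec{v}-\gamma\max_a\P_a\vec{v}$, the vector $\vec{v}$ is automatically the optimal value function of the recovered MDP, and the greedy action at any state $s$ is $\arg\max_a\vec{p}(s,a)\vec{v}$. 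Using $v(s_+)>v(s_-)$, this rule selects $a$ at $\hat{s}$ for agent $A$ but $b$ at $\hat{s}$ for agent $B$, which is exactly \eqref{Eq:Different-policies}.

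The step I expect to be most delicate is justifying that the two recovered value functions are genuinely identical and that the separating inequality survives at the optimum. The first point rests on the observation that editing the transition rows at the undemonstrated state $\hat{s}$ changes neither the feasible set nor the objective $\vec{1}^\top\vec{v}$ of \eqref{Eq:IRL-Optimization-1}; the second holds because $v(s_+)>v(s_-)$ is enforced by a hard constraint rather than by the objective, so it is valid at every optimizer even if the optimum were not unique. The only genuinely excluded case is $\D=\emptyset$, where the program returns the constant vector $\vec{v}=\tfrac{R_{max}}{1-\gamma}\vec{1}$, no transition structure can separate the greedy actions, and the statement therefore presumes the demonstration contains at least one pair.
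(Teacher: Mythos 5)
Your construction is correct and essentially the same as the paper's: both proofs route each demonstrated action to a high-value state and all alternatives to a low-value state (identically for the two agents, so they face the same LP and the constraint $v$-gap is forced), make undemonstrated states absorbing, and then swap the action--destination assignment at an undemonstrated state to force different greedy policies there. The only differences are cosmetic---the paper reuses the undemonstrated state itself as the high-value destination, whereas you separate the roles of $\hat{s}$, $s_+$, $s_-$---plus your explicit treatment of LP non-uniqueness and of the $\D=\emptyset$ edge case, both of which the paper's proof leaves implicit.
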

\begin{proof}
The proof proceeds by explicitly building two IRL agents, by replicating the structure of the agents in Fig.~\ref{Fig:Two-IRL-learners}. For simplicity of exposition, we assume that no state in $\S$ appears in more than one pair in $\D$, although the proof also holds in the converse case with due modifications.

Let $\D$ be an incomplete demonstration. This means that there is at least one state $s_0\in\S$ that does not appear in any pair in $\D$. Since, by assumption, $\abs{\S}>1$, let $s_1$ be some state in $\S$ such that $s_1\neq s_0$. Moreover, let $a_0,a_1$ denote two arbitrary actions in $\A$, with $a_0\neq a_1$ (recall that $\abs{\A}>1$, by assumption). We now construct the transition probabilities for agents $A$ and $B$ such that \eqref{Eq:Different-policies} holds.

For every $(s_n,a_n)\in\D$, we let 
\begin{multline*}
\P_A(s'\mid s_n,a)=\P_B(s'\mid s_n,a)\\
=\begin{cases}
1 & \text{if $s'=s_0$ and $a=a_n$;}\\
1 & \text{if $s'=s_1$ and $a\neq a_n$;}\\
0 & \text{otherwise}.
\end{cases}
\end{multline*}
In other words, according to both $\P_A$ and $\P_B$, the optimal actions always lead to $s_0$, and the sub-optimal actions always lead to $s_1$. Additionally, we consider that all states other than $s_0$ that do not appear in any pair in $\D$ transition are absorbing according both to $\P_A$ and $\P_B$. It follows that $\D$ implies a single constraint in the optimization problem \eqref{Eq:IRL-Optimization-1}, namely that $V(s_0)\geq V(s_1)+\eps$. Setting
\begin{align*}
\P_A(s'\mid s_0,a)&=\begin{cases}
1 & \text{if $s'=s_0$ and $a=a_0$;}\\
1 & \text{if $s'=s_1$ and $a\neq a_0$;}\\
0 & \text{otherwise;}
\end{cases}\\
\P_B(s'\mid s_0,a)&=\begin{cases}
1 & \text{if $s'=s_0$ and $a=a_1$;}\\
1 & \text{if $s'=s_1$ and $a\neq a_1$;}\\
0 & \text{otherwise,}
\end{cases}
\end{align*}
it follows that $\pi_A(s_0;r_A(\D))=a_0$ and $\pi_B(s_0;r_B(\D))=a_1$, and the proof is complete.
\end{proof}

This is a negative result for class teaching: we show that the differences between the agents may imply that the same reward leads to different optimal policies which, in turn, implies that there are cases where the same demonstration will lead to rewards that are not ``compatible'' with the target policy (i.e., do not verify the constraint in \eqref{Eq:MT-MultiIRL-Optimization}). This is particularly true for classes where the learners exhibit large differences among themselves. We now identify necessary conditions to ensure that two learners recover reward functions compatible with $r^*$ from a common demonstration $\D$.

\begin{lemma}
\label{Lemma:Polddemod}
Given two MDPs $(\S,\A,\P_A,r^*,\gamma)$ and $(\S,\A,\P_B,r^*,\gamma)$, if $\pi_1^*(r^*)\neq\pi^*_2(r^*)$, then the two IRL agents $(\S,\A,\P_A,\gamma)$ and $(\S,\A,\P_B,\gamma)$ require different demonstrations $\D_A$ and $\D_B$ in order to recover a reward compatible with $r^*$.
\end{lemma}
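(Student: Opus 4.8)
The plan is to argue by contradiction: assume a single demonstration $\D$ lets both agents recover a reward compatible with $r^*$, and show this forces $\pi^*_A(r^*)$ and $\pi^*_B(r^*)$ to agree where we assumed they differ. Since $\pi^*_A(r^*)\neq\pi^*_B(r^*)$ by hypothesis, I would fix a state $s_0\in\S$ at which the optimal actions disagree, say $a_A$ optimal in $s_0$ for $(\S,\A,\P_A,r^*,\gamma)$ and $a_B$ optimal in $s_0$ for $(\S,\A,\P_B,r^*,\gamma)$, with $a_A\neq a_B$. Ruling out any common $\D$ is exactly the assertion that the teaching demonstrations $\D_A$ and $\D_B$ must differ.

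The one structural fact I would rely on can be read directly off \eqref{Eq:IRL-Optimization-1}: a pair $(s,a)\in\D$ imposes, for agent $\ell$, the constraints $(\vec{p}_\ell(s,a)-\vec{p}_\ell(s,b))\vec{v}\succeq\eps$ for all $b\in\A$, so the recovered value function ranks $a$ strictly above every competitor in $s$ and $\pi^*_\ell(r_\ell(\D))$ selects $a$ in $s$. Consequently, on every demonstrated state the two recovered policies $\pi^*_A(r_A(\D))$ and $\pi^*_B(r_B(\D))$ must agree. Compatibility with $r^*$, namely $\pi^*_\ell(r_\ell(\D))=\pi^*_\ell(r^*)$, then transfers this agreement to the target policies, so $\pi^*_A(r^*)$ and $\pi^*_B(r^*)$ coincide on every state that $\D$ demonstrates.

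It follows that the conflicting state $s_0$ cannot be demonstrated by $\D$: if it were, say through a pair $(s_0,a)$, both recovered policies would select $a$ in $s_0$, forcing $a_A=a=a_B$ and contradicting $a_A\neq a_B$. This mirrors the worked example of Fig.~\ref{Fig:Two-IRL-learners}, where correctly teaching the decision at state $1$ demands the pair $(1,a)$ for one agent and $(1,b)$ for the other, so the two demonstrations are forced apart at that state.

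The remaining, and in my view hardest, step is to close off the possibility that $\D$ simply omits $s_0$ yet still induces the correct---and necessarily different---actions at $s_0$ for both agents through the objective $\vec{1}^\top\vec{v}$ alone. Here I would invoke the ambiguity intrinsic to IRL, already visible in \eqref{Eq:IRL-Inequality}: with no constraint touching $s_0$, the ordering of $a_A$ and $a_B$ at $s_0$ is governed by the regularizer rather than by $r^*$, so the recovered preference there carries no guarantee of matching either target. The running example makes this concrete, as $\D=\set{(2,b)}$ constrains only state $2$ and fails to teach the decision at state $1$ for both $A$ and $B$. Formalizing this into the statement that a faithful demonstration must constrain every state where the target policies disagree is the delicate part; once established, it rules out the omission of $s_0$ and forces $\D_A\neq\D_B$ in all cases.
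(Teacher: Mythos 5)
Your proof follows the same skeleton as the paper's: fix a disagreement state $s_0$ with $\pi_A^*(s_0;r^*)=a_A\neq a_B=\pi_B^*(s_0;r^*)$, and split on whether $s_0$ appears in the common demonstration $\D$. The demonstrated case is handled correctly (in fact slightly more generally than in the paper, since you rule out $(s_0,a)$ for \emph{any} action $a$, not only $a_A$ or $a_B$). But the omitted case is a genuine gap, and you say so yourself. Moreover, the argument you sketch for it cannot be completed as stated: observing that with no constraint at $s_0$ the recovered preference ``carries no guarantee of matching either target'' only shows that a common demonstration \emph{might} fail, while the lemma asserts it \emph{must} fail for at least one agent.

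What closes this case in the paper is a positive characterization of what the learner recovers, not an appeal to ambiguity. The learner model \eqref{Eq:IRL-Optimization-1} maximizes $\vec{1}^\top\vec{v}$ subject only to the $\eps$-gap constraints generated by demonstrated pairs; if $s_0$ never appears in $\D$, no constraint separates the actions available at $s_0$, and the maximization leaves both $a_A$ and $a_B$ equally good there, for both agents. This is exactly the computation carried out in Section~\ref{section:teachingIRL} for $\D=\set{(2,b)}$: the recovered reward \eqref{Eq:Reward-A1} makes the policy that mixes $a$ and $b$ at state $1$ optimal. Consequently each agent recovers a reward whose set of optimal policies includes one randomizing between $a_A$ and $a_B$ at $s_0$, and such a reward is incompatible with $r^*$, under which the optimal action at $s_0$ is unique (and different) for each agent. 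This indifference argument---or some substitute showing that the recovered reward is necessarily wrong at $s_0$, rather than merely unguaranteed---is the missing piece you would need to force the contradiction; without it, your case analysis does not conclude.
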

\begin{proof}
Let $s_0\in\S$ be such that $\pi_A^*(s_0;r^*)=a_0$ and $\pi_B^*(s_0;r^*)=a_1$, with $a_0\neq a_1$, and suppose that we provide a common demonstration to agents $A$ and $B$. Clearly, if either $(s_0,a_0)$ or $(s_0,a_1)$ appear in $\D$, one of the agents will learn a reward that is not compatible with $r^*$. On the other hand, if $s_0$ does not appear in $\D$, both agents will learn rewards according to which the policy that selects $a_1$ and $a_2$ with equal (and positive) probability is optimal, which are incompatible with $r^*$.
\end{proof}

Lemma~\ref{Lemma:Polddemod} establishes that, in general, we cannot expect to achieve successful class teaching, where the same examples can be used by everyone. It also provides a verified way to test how different the learner can be before we need personalized teaching. We get the following corollary.

\begin{corollary}[Possibility of Class Teaching]\label{Corollary}
 Given two IRL agents $(\S,\A,\P_A,\gamma)$ and $(\S,\A,\P_B,\gamma)$, it is possible to class-teach a reward $r^*$ if and only the optimal policies for the MDPs $(\S,\A,\P_A,r^*,\gamma)$ and $(\S,\A,\P_B,r^*,\gamma)$ are the same, i.e., if $\pi_A^*(r^*)=\pi^*_B(r^*)$. 
\end{corollary}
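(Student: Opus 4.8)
The plan is to establish the biconditional by treating the two implications separately, using Lemma~\ref{Lemma:Polddemod} for one direction and Lemma~\ref{Lemma:Complete-demos} for the other. First I would spell out what ``possible to class-teach $r^*$'' means: it asserts the existence of a \emph{single} common demonstration $\D$ that simultaneously satisfies the constraint of \eqref{Eq:MT-MultiIRL-Optimization} for both agents, i.e. $\pi_A^*(r_A(\D))=\pi_A^*(r^*)$ and $\pi_B^*(r_B(\D))=\pi_B^*(r^*)$.

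For the ``only if'' direction I would argue by contraposition and invoke Lemma~\ref{Lemma:Polddemod} almost verbatim. Suppose $\pi_A^*(r^*)\neq\pi_B^*(r^*)$. Lemma~\ref{Lemma:Polddemod} then guarantees that agents $A$ and $B$ require genuinely different demonstrations in order to each recover a reward compatible with $r^*$, so no common $\D$ can serve both and class teaching is impossible. Taking the contrapositive yields: if class teaching is possible then $\pi_A^*(r^*)=\pi_B^*(r^*)$.

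For the ``if'' direction I would produce an explicit witness. Writing $\pi^*:=\pi_A^*(r^*)=\pi_B^*(r^*)$, take $\D$ to be the \emph{complete} demonstration that pairs every state $s\in\S$ with a fixed optimal action under $\pi^*$, and hand this same $\D$ to both learners. At each state, $\D$ then contributes to \eqref{Eq:IRL-Optimization-1} the $\eps$-margin constraints requiring the demonstrated action to beat every alternative; since the immediate reward does not depend on the chosen action, these force the recovered value to rank the demonstrated action strictly first, so $\pi^*$ is the optimal policy of the reward each learner recovers. This is precisely the mechanism underlying Lemma~\ref{Lemma:Complete-demos}, and it yields $\pi_A^*(r_A(\D))=\pi^*=\pi_A^*(r^*)$ together with $\pi_B^*(r_B(\D))=\pi^*=\pi_B^*(r^*)$, so the common $\D$ class-teaches $r^*$.

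I expect the delicate point to lie in the ``if'' direction: I must ensure that the complete demonstration encoding $\pi^*$ pins each learner's recovered reward down to one whose optimal policy is exactly $\pi^*$, rather than merely guaranteeing that the two learners agree with one another. The key observation making this work is that, although $\P_A$ and $\P_B$ differ, both models agree that $\pi^*$ is optimal under $r^*$; consequently the per-state optimality constraints generated by $\D$ are mutually consistent and each is met by the corresponding learner's own optimization in \eqref{Eq:IRL-Optimization-1}. A minor case to dispatch is that of ties among optimal actions, which is avoided by letting $\D$ encode a fixed deterministic representative of $\pi^*$.
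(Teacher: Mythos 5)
Your proof is correct and takes essentially the paper's route: the paper derives this corollary directly from Lemma~\ref{Lemma:Polddemod} --- exactly your contrapositive ``only if'' argument --- and leaves the ``if'' direction implicit. Your explicit witness for that direction (a complete demonstration of the shared optimal policy, whose $\eps$-margin constraints in \eqref{Eq:IRL-Optimization-1} pin each learner's recovered reward to one whose optimal policy is $\pi^*$) simply fills in the half the paper omits, consistent with the mechanism of Lemma~\ref{Lemma:Complete-demos}.
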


Corollary~\ref{Corollary} states the main challenge of class teaching in sequential tasks: if the differences between learners imply different optimal policies, they cannot be taught with a common demonstration.

\subsection{Proposed Algorithm}
\label{sec:Proposed Algorithm}

In Section~\ref{Subsec:Teaching-IRL-class} we showed that class teaching is not possible in the general case. Our results also provide criteria to determine whether, in a particular situation, class teaching is possible or not. 

From the teacher's perspective, the goal is both to teach the correct task and to reduce the effort in teaching.%
\footnote{Recall that we consider the effort to depend directly on the number of demonstrations provided.} 
When providing a demonstration to a class, the effort is the same, independently of the number of learners in the class. So if an example is required for multiple learners, it is more efficient to provide it for the class as a whole than individually. Conversely, when demonstrations are contradictory, the teacher should provide a different demonstration to each learner individually (with the corresponding increase in the teaching effort). We note that some examples may be required for a learner but redundant for another. In such situation, and taking into account the way we measure effort, the teacher can still provide such examples to the class without added effort or the danger of preventing correct learning. 

\begin{algorithm}[!htbp]
\caption{Teaching Multiple IRL Learners}
\label{tab:OptTeachX0}
\begin{algorithmic}
\REQUIRE IRL learners $A$ and $B$
\REQUIRE Set of possible initial states $\S_0$
\REQUIRE Target reward $r^*$
\STATE Compute set of demonstrations starting in $s_0\in\S_0$
\STATE Check optimal policies of each learner
\STATE Select demonstrations consistent with the optimal policies of learners
\STATE Provide class demonstrations
\FOR{$\ell=A,B$}
\STATE Provide non-redundant demonstrations to learner $\ell$
\ENDFOR
\end{algorithmic}
\end{algorithm}

From all these considerations, we propose the simple approach in Alg.~\ref{tab:OptTeachX0}, where the teacher builds the demonstration as a set of optimal trajectories of state-action pairs generated from some initial state $s_0$. This extends the algorithm in \cite{brown2018machine} for the class setting. We restrict the role of the teacher to that of selecting the initial states. Then,  the algorithm proceeds as follows: it identifies the optimal policy for each learner given the target reward $r^*$. The teacher then demonstrates to the class those trajectories that are compatible across learners, and to each learner individually those trajectories that are specific to that learner's optimal policy. \footnote{Without lack of generality the algorithm is presented for 2 learners. For more than 2 learners we can select demonstrations than are informative to any subset of learners to reduce the effort.}

\paragraph{Complexity} We can analyze the complexity of this work along several dimensions. Complexity of verifying if class teaching is possible or not implies comparing the optimal policies for the different learners. This comparison requires solving the MDP for each learner, which has a polynomial complexity.

On the other hand, computing which demonstrations to provide to each learner is linear in the number of learners and possible initial states. However, if we want to reduce the teaching effort by providing the most efficient demonstrations, we must identify which demonstrations introduce redundant constraints. This can be done through linear programming \cite{brown2018machine}, and requires solving as many linear programs as the size of the initial demonstrations set. In this case, since linear programming is solvable in polynomial time, we again obtain polynomial complexity.

%%%%%%%%%%%%%%%%%%%%%%%%%%%%%%%%%%%%%%%%%%%%%%%%%%%%%%%%%%%%%%%%%%%%%%%%%%%%%%%%%%%%%%%%%%%%%%%%%%%%%%%%%%%%%%

\paragraph{Approximate Solutions}
In our discussion so far, we consider only exact demonstrations and investigate conditions under which all learners in the class are able to recover the desired reward function (or a policy equivalent one) exactly. We could, however, consider situations where some error is acceptable. 

\subparagraph{Error in the policy}
For instance, we can consider an extended setting that allows small errors in the reward recovered by (some of) the learners. In such setting, we could define an $\epsilon>0$ such as $|\pi^*(r^*)-\pi^*(r(D))|<\epsilon$, for example by combining our approach with that in \cite{Haug2018TeachingIR}. Such approximate setting could allow reductions to the teaching effort in the case where class teaching is possible. 

However, the impossibility results we established still hold even in the approximate case. In fact, when class teaching is not possible, it is possible to find $\epsilon_L>0$ such that $|\pi^*(r^*)-\pi^*(r(D))|\geq\epsilon_L$, i.e., we cannot reduce the error arbitrarily (for otherwise class teaching would be possible). The example of Fig.~\ref{Fig:Two-IRL-learners} shows one such case, where the same demonstration, if provided to the two learners, would result in an error that could not be made arbitrarily small.

\subparagraph{Loss in value}

Another setting is to consider that we allow the learners to learn different rewards as long as the expected cumulative discounted reward is not far from the optimal. Let us consider a scenario with two IRL agents, $A$ and $B$, each one described as a rewardless MDP $(\S,\A,\P_\ell,\gamma)$, $\ell=A,B$. Further assume that $\pi_A^*(r^*)\neq\pi_B^*(r^*)$, and suppose that we provide both learners with a complete demonstration $\D$ such that
\begin{equation*}
\pi_A^*(r_A(\D))=\pi_A^*(r^*).
\end{equation*} 
In other words, learner $A$ is able to recover from $\D$ a reward that is policy equivalent to $r^*$, i.e., such that
\begin{equation*}
V^{\pi_A^*(r_A(\D))}(s)=V^*(s)
\end{equation*} 
for all $s\in\S$. Lemma~\ref{Lemma:Polddemod} ensures that learner $B$ will recover a reward $r_B(\D)$ such that 
\begin{equation*}
\pi_B^*(r_B(\D))=\pi_A^*(r_A(\D))\neq\pi_B^*(r^*).
\end{equation*} 

In spite of our impossibility results, we can nevertheless provide an upper bound to how much the performance of learner $B$ strays from that of learner $A$. For simplicity of notation, we henceforth write $V^{\pi_\ell}$ to denote $V^{\pi_\ell^*(r_\ell(\D))}$, for $\ell=A,B$. Then,
\begin{align*}
\vec{v}^{\pi_A}-\vec{v}^{\pi_B}
  &=\vec{r}^*_{\pi_A}+\gamma\P_{A,\pi_A}\vec{v}^{\pi_A}-\vec{r}^*_{\pi_B}-\gamma\P_{B,\pi_B}\vec{v}^{\pi_B}\\
    &=\gamma(\P_{A,\pi_A}\vec{v}^{\pi_A}-\P_{B,\pi_B}\vec{v}^{\pi_B}),
\end{align*}
where the last equality follows from the fact that $\vec{r}^*_{\pi_A}=\vec{r}^*_{\pi_B}$, since $\pi_B^*(r_B(\D))=\pi_A^*(r_A(\D))$. Some manipulation yields
\begin{multline*}
\vec{v}^{\pi_A}-\vec{v}^{\pi_B}
  =\frac{\gamma}{2}\Big[(\P_{A,\pi_A}+\P_{B,\pi_B})(\vec{v}^{\pi_A}-\vec{v}^{\pi_B})\\
  +(\P_{A,\pi_A}-\P_{B,\pi_B})(\vec{v}^{\pi_A}+\vec{v}^{\pi_B})\Big].
\end{multline*}
Defining
\begin{align*}
    \bar{\P}_\pi&=\frac{1}{2}(\P_{A,\pi_A}+\P_{B,\pi_B}) &
    \bar{v}_\pi&=\frac{1}{2}(\vec{v}^{\pi_A}+\vec{v}^{\pi_B}),
\end{align*}
we get
\begin{equation*}
\vec{v}^{\pi_A}-\vec{v}^{\pi_B}
  =\gamma\left[\mat{I}-\gamma\bar{\P}_\pi\right]^{-1}(\P_{A,\pi_A}-\P_{B,\pi_B})\bar{\vec{v}}_\pi.
\end{equation*}
Noting that $\bar{\P}_\pi$ is still a stochastic matrix, the inverse above is well defined. Computing the norm on both sides, we finally get, after some shuffling,
\begin{equation*}
\norm{\vec{v}^{\pi_A}-\vec{v}^{\pi_B}}_2
  \leq\frac{\gamma}{1-\gamma}\norm{\P_{A,\pi_A}-\P_{B,\pi_B}}_2\norm{\bar{\vec{v}}_\pi}_2.
\end{equation*}
As expected, the difference in performance between agents $A$ and $B$ grows with the difference between the corresponding transition probabilities. Thus we cannot always reduce the error arbitrarily, and so cannot do class teaching, but we can bound the error.

\paragraph{Differences between learners}
Up to now we presented the differences between learners in terms of having different world models. This is in practice having different $\P$. There are other differences that also exist in many cases. People can have different discount factors $\gamma$. This can represent situations where the survival rate is different or situations where the agent expected more volatility in the environment. This situation is similar to the previous one. Different $\gamma$ values can give rise to different policies and so class teaching is only possible if they are the same.
Another possible difference is the different agents having different representations for the reward. For instance, the reward can be written as a linear combination of features, $R=w\phi(s)$. This case does not create any problem, assuming that both spaces of features allow to learn the correct reward, they have the same optimal policy and what happens is that they learn different vectors of the linear combination $w$. If the space of features is not rich enough to represent the optimal reward function, then we are in the approximate setting (\cite{kamalaruban2019interactive}) where class teaching might not always be possible.

\section{Examples}

In this section we provide practical examples of when class teaching can, or cannot, be made in different scenarios. We present two simple scenarios motivated by potential applications in human teaching, and two extra scenarios that show other possibilities of our algorithm, namely that it works in random MDPs, and that they also accept differences in terms of different $\gamma$.

\paragraph{Scenario~1. Brushing teeth (Cognitive training)}

Training sequential tasks is very important for many real world applications. For instance elderly whose cognitive skills are diminishing often struggle to plan simple tasks such as brushing their teeth or dressing up~\cite{si2007context}. Motivated by such situations, we model how to train a group of people on the steps to brush their teeth (Fig.~\ref{Fig:Brushing}). To brush the teeth, the brush ($B$) and toothpaste ($P$) must be picked; the brush must be filled ($F$) with toothpaste; only then brushing will lead to clean teeth ($C$). People may forget to put the paste, or may have coordination problems and be unable to hold the brush while placing the paste. 

\begin{figure}[!tb]
\centering
\subfigure[Teeth brushing example.]{\label{Fig:Brushing}
  \includegraphics[width=\columnwidth]{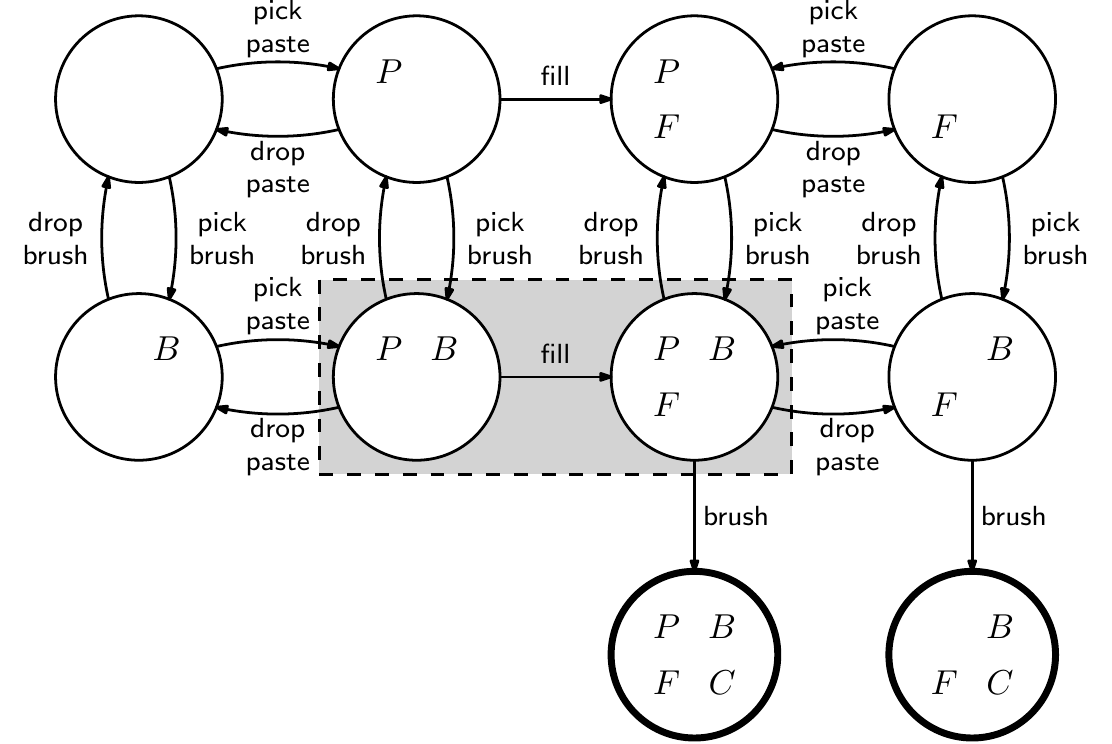}
} \hfill 
\subfigure[Addition with carry.]{\label{Fig:Addition}
  \includegraphics[width=\columnwidth]{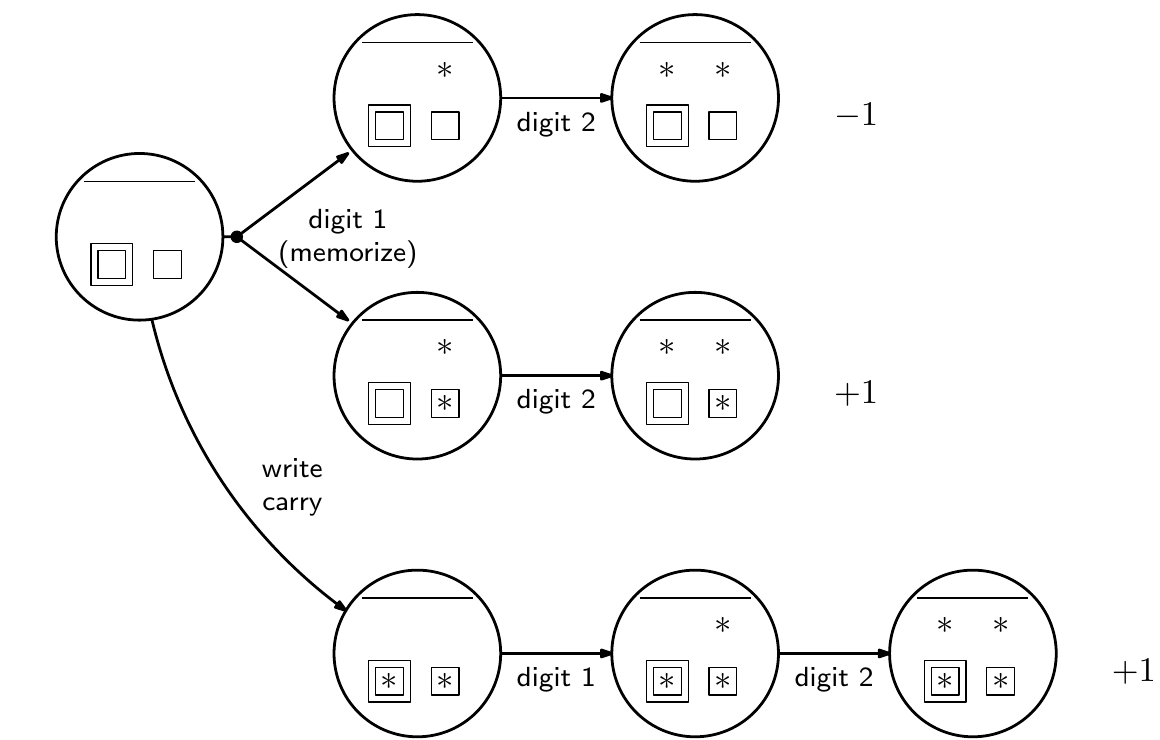}
}
\caption{\subref{Fig:Brushing} Brushing teeth MDP. Each state is described by 4 binary features: $P$ (holding paste), $B$ (holding brush), $F$ (brush with paste), $C$ and (teeth clean). We consider the case where one user that cannot hold two objects at the same time and so some of the states are inaccessible (states in the shaded region). \subref{Fig:Addition} Simple model of 2-digit addition with a single carry. The asterisks represent digits. Some learners are able to memorize the carry digit (the single square) and do not have to explicit write it (the double square). If memorizing may fail (top path), the teacher might suggest to write the carry digit explicitly (lower path). Without writing the carry digit, a learner with difficulties may forget it (the top branch has higher probability), obtaining the wrong result.} 
\label{fig:Tasks}
\end{figure}

\paragraph{Scenario~2: Addition with Carry (Education)}

When teaching mathematical operations, teachers need to choose among different algorithms to perform those operations, taking into account the level of the learners, their capabilities for mental operations and how much practice they had~\cite{putnam1987structuring}. Let us consider addition with carry. For some learners, it might be useful to write down the carry digit to avoid confusion. A more advanced learner might find it confusing or even boring to be forced to make such auxiliary step. We can model this problem as the MDP in Fig.\ref{Fig:Addition}. The asterisks indicate which of the digits of the result have been computed (top). The square indicates whether or not the carry digit is memorized, while the double square indicates whether or not the carry digit is written down. A learner with bad memory may prefer to write down the carry digit, for otherwise there is a larger probability of forgetting it and getting a wrong result.

\paragraph{Scenario~3: Random MDP}

To further illustrate the application of our approach in a more abstract scenario (ensuring that our algorithm is not exploiting any particular structure of the previous scenarios), we also consider a randomly generated MDP with multiple states (5-20 states), actions (3-5 actions) and rewards. The transition probabilities and reward are sampled from a uniformed distribution.

\paragraph{Scenario~4: Difference in discount factor $\gamma$}

To consider other types of differences we now consider that the two learners are described by the MDP as in Fig.~\ref{Fig:Two-IRL-learners}a) but $\gamma_A=0.9$ and $\gamma_B= 0.01$ respectively. In this case one learner wants rewards sooner than the other. In this case the policy in state 0 is different and so class teaching in not possible.

\subsection*{Results}

\begin{table*}[!htbp]
\centering
\caption{Results for class teaching in 4 different MDPs. We present the average effort and the average relative loss for individual teaching, teaching consider the model of a single agent, or both (our approach). As baselines we use: i) Class i where we provide both learner the optimal demonstration for learner i (expected to have worse quality with minimum effort); ii) Individual where we provide each learner with a different demonstration (expected to have best quality with extra effort). Our algorithm is able to teach the task with less effort and with the same quality.}
\label{tab:results}
\begin{tabular}{lrrrrrrrr}
\toprule
            & \multicolumn{2}{c}{\bf 1.~Brushing}   
            & \multicolumn{2}{c}{\bf 2.~Addition} 
            & \multicolumn{2}{c}{\bf 3. Random MDP}
            & \multicolumn{2}{c}{\bf 4. Different $\gamma$}
            \\\midrule
            & $\frac{V-V^*}{V^*}$ & \textsf{effort} & $\frac{V-V^*}{V^*}$ & \textsf{effort} & $\frac{V-V^*}{V^*}$ & \textsf{effort} & $\frac{V-V^*}{V^*}$ & \textsf{effort}
            \\\midrule
Class $A$  & $-0.04$ & $0.5$ & $-0.05$ & $0.375$ & $-0.044$ & $0.83$ & $-0.49$ & $0.4$  \\
Class $B$  & $-0.5$  & $0.4$ & $-0.44$ & $0.375$ & $-0.012$ & $0.83$ & $-0.2$  & $0.22$ \\
Individual & $0.0$   & $0.9$ & $\mathbf{0.0}$ & $\mathbf{0.625}$ & $0.0$ & $1.67$  & $\mathbf{0.0}$ & $\mathbf{0.6}$\\
Algorithm~\ref{tab:OptTeachX0} & $\mathbf{0.0}$ & $\mathbf{0.8}$ & $\mathbf{0.0}$ & $\mathbf{0.625}$ & $\mathbf{0.0}$ & $\mathbf{1.17}$ & $\mathbf{0.0}$ & $\mathbf{0.6}$ \\\bottomrule
\end{tabular}
\end{table*}

We present the results on the four described scenarios in Tab.~\ref{tab:results}. As far as we know, our work is the first to address machine teaching to multiple IRL learners. For this reason, we compare our algorithm with two natural baselines: teaching each agent individually, where each agent gets a ``personalized'' demonstration (denoted as ``Individual''); and teaching the whole class considering the model of a single agent $\ell$ (denoted as ``Class $\ell$''. We expect individual teaching to provide the best results in terms of estimating the correct reward, but at a cost in terms of effort. Class teaching, on the other hand, will reduce the effort but might not allow correct learning. Our algorithm was able to always teach correctly while never having more effort than individual teaching.

In the Brushing scenario we can see that neither Class teaching could teach the task even if the effort was lower. The individual could teach everything but with a higher effort. Our algorithm could reduce the effort while still guaranteeing teaching. In the arithmetic case the results are similar. As the policies go through completely different paths it is not possible to reduce effort. For random MDPs we can see that the qualitative results are still the same with higher reductions in effort. Scenario 4 also shows that we can consider other differences among learners.

\section{Conclusions}

In this work we formalized the problem of class teaching for IRL learners, studied its properties, introduced an algorithm to address this problem and provided some simulations to illustrate the theoretical results. We identify a set of conditions to verify if class teaching is possible or not. Contrary to several recent results for density estimation and supervised learning (\cite{zhu2017no,yeo2019iterative}) where we can always do classroom teaching (with an extra effort), in the case of inverse reinforcement learning it is not always possible. We provided a computational way to verify if class teaching is possible or not.

We illustrated the theoretical results in four different tasks and confirmed that the class teaching approach is able to teach as well as individualized teaching with the additional advantage of a lower effort. The results provided in this work provide a quantitative evaluation of when class teaching is possible. As a side contribution, we showed also a simpler way to solve the IRL problem using directly the value function. 

On way to avoid the constraints we have derived from the differences between the learners is to allow some noise in the learning process. This approach has been considered in other works, for supervised learning settings, and it is interesting to consider it in this setting when it is detected that class teaching is not possible. We would then need to verify if it is more efficient to teach using a class but accept some errors, or increase the effort and do a partition of the class and avoid errors. Even allowing some errors there are situations where there is no solution. We showed even if we allow a small error in the reward, there will always be cases where the error in the policies will be too large. Another variant we considered is to accept an error in the reward if the value obtained with the learned policy under the true reward is not far from the optimal. Again we showed that it is not always possible to reduce the error arbitrarily but we can have a bound on that error. 
% isto é o que estamos a demonstrar agora.

We can imagine several applications of this work in teaching humans. For those kinds of applications the complexity of the algorithm is not a problem but the assumption of knowing the learner's decision-making process (i.e., MDP) is. We have to consider how to include interaction in the teaching process to overcome such problems as was done for other teaching problems, e.g. \cite{Melo18ijcai}.

Other applications of machine teaching include the study of possible attacks of machine learners, e.g. \cite{mei2015mtattacks}. We can use our approach to see if a set of learners can be attacked simultaneously or not.

  %%%%%%%%%%%%%%%%%%%%%%%%%%%%%%%%%%%%%%%%%%%%%%%%%%%%%%%%%%%%%%%%%%%%%%%%%%%%%%%%%%%%%%%%%%%%%%%%%%%%%%%%%%%%%%

\section*{Acknowledgments}
 This work was supported by national funds through the Portuguese Funda\c{c}\~{a}o para a Ci\^{e}encia e a Tecnologia.

%%%%%%%%%%%%%%%%%%%%%%%%%%%%%%%%%%%%%%%%%%%%%%%%%%%%%%%%%%%%%%%%%%%%%%%%%%%%%%%%%%%%%%%%%%%%%%%%%%%%%%%%%%%%%%

% ---- Bibliography ----
\bibliographystyle{ecai}
\bibliography{ECML19}

\end{document}